\newtheorem{theorem}{Theorem}[section]
\newtheorem{lemma}[theorem]{Lemma}
\newtheorem{corollary}[theorem]{Corollary}
\theoremstyle{definition}
\DeclareMathOperator*{\argmin}{arg\,min}
\newcommand{\R}{\mathbb{R}}
\newcommand{\N}{\mathbb{N}}
\newcommand{\E}{\mathbb{E}}
\newcommand{\KL}{\mathrm{KL}}
\title{A Note on Zeroth-Order Optimization on the Simplex}
\author{Tijana Zrnic$^*$ ~~~ Eric Mazumdar$^\dagger$\\ \\
$^*$University of California, Berkeley\\
$^\dagger$California Institute of Technology}
\begin{document}

\maketitle

\begin{abstract}
We construct a zeroth-order gradient estimator for a smooth function defined on the probability simplex. The proposed estimator queries the simplex only. We prove that projected gradient descent and the exponential weights algorithm, when run with this estimator instead of exact gradients, converge at a $\mathcal O(T^{-1/4})$ rate. 
\end{abstract}

\section{Introduction}
Resource allocation, mechanism design, load balancing, strategic classification, and many other problems with economic incentives require optimizing an objective function on the simplex. The simplex constraint can describe a fixed load to be distributed~\cite{NoRegretRouting}, a constraint on the budget to be allocated~\cite{GameSimplex}, or a distribution over actions individuals can take \cite{zrnic2021leads}; see \citet{bomze2002regularity} and \citet{de2008complexity} for a survey on simplex optimization with applications. Moreover, these optimization problems often have to be solved using only \emph{zeroth-order} feedback, i.e., function evaluations at different points on the simplex, as gradient feedback is not easily obtainable.

There are a number of methods for zeroth-order optimization (e.g.~\cite{flaxman2004online, shamir2013complexity, hazan2014bandit, bubeck2017kernel}), typically relying on approximating function gradients using only function evaluations. However, these methods usually require querying the function at a noisy point in a ball around the current iterate. Applying such a strategy on the simplex would violate the simplex constraint, due to the low dimensionality of the simplex. Indeed, existing algorithms (see, e.g., \cite{flaxman2004online} or \cite{shamir2013complexity}) often assume that the constraint set in $\R^d$ contains a ball in $\R^d$ with small enough radius, a condition not satisfied by the simplex. This obstacle motivates designing and studying new algorithms for zeroth-order optimization on the simplex.

In this note we construct a one-point zeroth-order gradient estimator that only queries the simplex and prove convergence of standard optimization methods when run with this estimator. In Section \ref{sec:estimator} we present the estimator and prove a bound on its expected error for smooth objectives. Then, in Section \ref{sec:algos} we show that projected gradient descent and the exponential weights algorithm converge at a $\mathcal O(T^{-1/4})$ rate when run with this estimator.

To set the problem up more formally, suppose that an objective function $f$ can only be queried on the $d$-dimensional probability simplex, denoted $\Delta_d$. We show that querying $f$ at a convex combination of a point $x\in\Delta_d$ and a Dirichlet noise vector---a combination guaranteed to lie on the simplex---approximately recovers the gradient of $f$ at $x$.

In particular, we prove that
\begin{equation}
\label{eq:estimator_informal}
C \E[f((1-\delta)x + \delta u)P_d u] \approx P_d \nabla f(x),
\end{equation}
where $P_d = I_d - \frac{1}{d}\mathbf{1}_d\mathbf{1}_d^\top$, $u$ is sampled from a Dirichlet distribution, $\delta\in(0,1)$ is a tuning parameter, and $C$ is an explicit normalizing factor. Here, $I_d$ denotes the $d$-dimensional identity matrix and $\mathbf{1}_d$ denotes the $d$-dimensional all-ones vector. Moreover, we show that having access to $P_d \nabla f(x)$ preserves all first-order information about $f$, which will allow us to approximate running standard gradient-based algorithms on $f$ using only zeroth-order information.

\section{Zeroth-order gradient estimator}
\label{sec:estimator}

In what follows we formalize statement \eqref{eq:estimator_informal} in Theorem \ref{thm:error_bound}; then, in Corollary \ref{corollary:1st_order} we show that our zeroth-order gradient estimator provides a good first-order approximation.

The quality of our estimator relies on $f$ being $\beta$-smooth. Formally, we assume
\begin{equation}
\tag{A1}
\label{eq:smoothness}
\|\nabla f(x) - \nabla f(y)\|\leq \beta \|x-y\|,
\end{equation}
for all $x,y\in\Delta_d$. We use $\|\cdot\|$ to denote the $\ell_2$-norm throughout.

We let $\mathrm{Dir}(\alpha \mathbf{1}_d)$ denote the $d$-dimensional Dirichlet distribution with parameter $(\alpha,\dots,\alpha)$, for $\alpha>0$.

\begin{theorem}[Error of gradient estimator]
\label{thm:error_bound}
Suppose that $f:\Delta_d\rightarrow\R$ is $\beta$-smooth \eqref{eq:smoothness}. Let $u\sim \mathrm{Dir}(\alpha\mathbf{1}_d)$. Then, for all $x\in\Delta_d$ and $\delta\in(0,1)$, it holds that
$$ \left\|\E \left[ \frac{1}{\delta} f((1-\delta)x + \delta u) P_d u\right] - \frac{1}{d(\alpha d + 1)} P_d \nabla f(x) \right\| \leq 2\beta \delta,$$
where $P_d = I_d - \frac{1}{d}\mathbf{1}_d\mathbf{1}_d^\top$.
\end{theorem}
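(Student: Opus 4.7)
The plan is to Taylor-expand $f$ around $x$, compute the expectation term by term using the mean and covariance of the Dirichlet distribution, and bound the quadratic remainder.

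First, write $y = (1-\delta)x + \delta u$ so that $y - x = \delta(u - x)$. By $\beta$-smoothness, the standard descent-lemma bound gives
$$ f(y) \;=\; f(x) + \delta\, \nabla f(x)^\top (u - x) + R, \qquad |R| \le \tfrac{\beta \delta^2}{2}\|u - x\|^2. $$
Dividing by $\delta$, multiplying by $P_d u$, and taking expectations decomposes the target into three pieces:
$$ \E\!\left[\tfrac{1}{\delta} f(y) P_d u\right] = \tfrac{f(x)}{\delta}\, P_d \E[u] \;+\; P_d \E\!\left[u(u-x)^\top\right] \nabla f(x) \;+\; \E\!\left[\tfrac{R}{\delta} P_d u\right]. $$

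For the first piece, $u \sim \mathrm{Dir}(\alpha \mathbf{1}_d)$ has mean $\E[u] = \frac{1}{d}\mathbf{1}_d$, and since $P_d \mathbf{1}_d = 0$, this term vanishes. For the second piece, I will use that the Dirichlet covariance is
$$ \mathrm{Cov}(u) \;=\; \tfrac{1}{d(d\alpha+1)}\, P_d, $$
so that $\E[uu^\top] = \frac{1}{d(d\alpha+1)} P_d + \frac{1}{d^2}\mathbf{1}_d\mathbf{1}_d^\top$. Combined with $P_d \E[u] x^\top = 0$, $P_d \mathbf{1}_d\mathbf{1}_d^\top = 0$, and $P_d^2 = P_d$ (the latter since $P_d$ is the orthogonal projection onto $\mathbf{1}_d^\perp$), the middle term collapses to exactly $\frac{1}{d(d\alpha+1)} P_d \nabla f(x)$, which matches the claimed main term.

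The remainder is where the error bound comes from. Using $|R| \le \frac{\beta \delta^2}{2}\|u-x\|^2$ and $\|P_d u\| \le \|u\| \le 1$,
$$ \left\|\E\!\left[\tfrac{R}{\delta} P_d u\right]\right\| \;\le\; \tfrac{\beta \delta}{2}\, \E\!\left[\|u-x\|^2 \|P_d u\|\right] \;\le\; \tfrac{\beta \delta}{2}\cdot 4 \;=\; 2\beta\delta, $$
where I used $\|u - x\| \le \|u\| + \|x\| \le 2$ for points on $\Delta_d$.

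The main obstacle is the second step: verifying the closed-form Dirichlet covariance and then doing the algebra carefully to see that $P_d \E[u(u-x)^\top] \nabla f(x)$ reduces cleanly to $\frac{1}{d(d\alpha+1)} P_d \nabla f(x)$, with every cross term involving $\mathbf{1}_d$ killed by $P_d$. Everything else is a direct consequence of smoothness and simple norm bounds on the simplex.
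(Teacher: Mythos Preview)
Your proposal is correct and follows essentially the same approach as the paper: both Taylor-expand via the smoothness bound, use $P_d\mathbf{1}_d=0$ and the Dirichlet covariance $\mathrm{Cov}(u)=\frac{1}{d(\alpha d+1)}P_d$ (together with $P_d^2=P_d$) to isolate the $\frac{1}{d(\alpha d+1)}P_d\nabla f(x)$ term, and then bound the quadratic remainder using $\|P_d u\|\le 1$ and $\|u-x\|\le 2$. The only cosmetic difference is that the paper first inserts zero-mean terms and then groups, whereas you expand first and compute term by term; your handling of $\E[uu^\top]$ versus $\mathrm{Cov}(u)$ is in fact slightly cleaner than the paper's.
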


\begin{proof}
By Lemma \ref{lemma:Pd_properties} (c), we know $\E P_d u = 0$; hence
\begin{align*}
    &\E \left[ \frac{1}{\delta} f((1-\delta)x + \delta u)P_d u - \frac{1}{d(\alpha d + 1)} P_d \nabla f(x) \right]\\
    &\quad = \E \left[ \frac{1}{\delta} f((1-\delta)x + \delta u)P_d u -  \frac{1}{d(\alpha d + 1)} P_d \nabla f(x) - \frac{1}{\delta} f(x) P_d u +   P_d u \nabla f(x)^\top x  \right].
\end{align*}
Now we use the fact that the covariance matrix of the Dirichlet distribution $\mathrm{Dir}(\alpha \mathbf{1}_d)$ is given by
$$\E u u^\top =  \frac{1}{d(\alpha d + 1)} P_d,$$
and so $P_d = d(\alpha d + 1)\E uu^\top$. Moreover, by Lemma \ref{lemma:Pd_properties} (a), we have $P_d = d(\alpha d + 1)P_d \E uu^\top$. This further gives
\begin{align*}
    &\E \left[ \frac{1}{\delta} f((1-\delta)x + \delta u)P_d u - \frac{1}{d(\alpha d + 1)} P_d \nabla f(x) - \frac{1}{\delta} f(x) P_d u +  P_d u \nabla f(x)^\top x  \right]\\
    &\quad = \E \left[ \frac{1}{\delta} f((1-\delta)x + \delta u)P_d u -  P_d uu^\top \nabla f(x) - \frac{1}{\delta} f(x) P_d u +   P_d u \nabla f(x)^\top x  \right]\\
    &\quad =  \E \left[P_d u \left( \frac{1}{\delta} f((1-\delta)x + \delta u) - \frac{1}{\delta} f(x) -   \nabla f(x)^\top (u-x)   \right)  \right].
\end{align*}
Now we apply Jensen's inequality to get
\begin{align*}
&\left\|\E \left[ \frac{1}{\delta} f((1-\delta)x + \delta u)P_d u - \frac{1}{d(\alpha d + 1)} P_d \nabla f(x) \right]\right\|\\
&\quad \leq \E \left[\left\|P_d u\right\| \left| \frac{1}{\delta} f((1-\delta)x + \delta u) - \frac{1}{\delta} f(x) -   \nabla f(x)^\top (u-x)   \right| \right].
\end{align*}
By Lemma \ref{lemma:Pd_properties} (d), we know $\|P_d u\| \leq 1$ almost surely. Moreover, by Lemma \ref{lemma:smoothness}, we have
$$\left| \frac{1}{\delta} f((1-\delta)x + \delta u) - \frac{1}{\delta} f(x) -   \nabla f(x)^\top (u-x)   \right| \leq \frac{\beta\delta}{2}\|u-x\|^2.$$
Therefore,
$$\left\|\E \left[ \frac{1}{\delta} f((1-\delta)x + \delta u)P_d u - \frac{1}{d(\alpha d + 1)} P_d \nabla f(x) \right]\right\| \leq \frac{\beta\delta}{2}\E \|u - x\|^2 \leq 2\beta\delta ,$$
which follows since $\sup_{x,x'\in\Delta_d} \|x - x'\| \leq 2$.
\end{proof} 

Now we show that the gradient estimator from Theorem \ref{thm:error_bound} preserves all first-order information about $f$, which will allow approximating first-order optimization methods using only zeroth-order feedback.

\begin{corollary}[First-order approximation]
\label{corollary:1st_order}
Suppose that $f:\Delta_d\rightarrow\R$ is $\beta$-smooth \eqref{eq:smoothness}. Consider the gradient estimator $\tilde g = \frac{d(\alpha d +1)}{\delta} f((1-\delta)x + \delta u)P_d u$, where $u\sim \mathrm{Dir}(\alpha \mathbf{1}_d)$. Then,
$$\left| \E \tilde g^\top (x'-x) - \nabla f(x)^\top (x'-x)\right| \leq 4\beta\delta d(\alpha d + 1).$$
\end{corollary}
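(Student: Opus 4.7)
The plan is to reduce the claim to Theorem \ref{thm:error_bound} via Cauchy–Schwarz, after exploiting the fact that the displacement $x'-x$ lies in the tangent space of the simplex, which coincides with the image of $P_d$.

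First, I would rescale Theorem \ref{thm:error_bound} by the factor $d(\alpha d + 1)$. Since $\tilde g = \frac{d(\alpha d +1)}{\delta} f((1-\delta)x + \delta u)P_d u$, this immediately gives
$$\|\E \tilde g - P_d \nabla f(x)\| \leq 2\beta\delta d(\alpha d + 1).$$

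The key geometric step is to observe that, because $x, x' \in \Delta_d$, both vectors sum to $1$, so $\mathbf{1}_d^\top (x' - x) = 0$. Consequently $P_d(x' - x) = x' - x$. Using that $P_d$ is symmetric (Lemma on $P_d$ properties, as referenced in the proof of Theorem \ref{thm:error_bound}), this allows me to rewrite
$$\nabla f(x)^\top (x' - x) = \nabla f(x)^\top P_d (x' - x) = (P_d \nabla f(x))^\top (x' - x),$$
so the difference I need to bound is exactly $(\E \tilde g - P_d \nabla f(x))^\top (x' - x)$.

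Finally, I would apply Cauchy–Schwarz and the simplex diameter bound $\|x' - x\| \leq 2$, combining with the rescaled Theorem \ref{thm:error_bound} to obtain
$$\left|\E \tilde g^\top (x'-x) - \nabla f(x)^\top (x'-x)\right| \leq \|\E \tilde g - P_d \nabla f(x)\|\cdot \|x' - x\| \leq 4\beta\delta d(\alpha d + 1).$$
There is no serious obstacle here; the proof is essentially bookkeeping on top of Theorem \ref{thm:error_bound}. The only substantive point is recognizing that $P_d$ acts as the identity on the tangent space of the simplex, which is what ensures that the projection in Theorem \ref{thm:error_bound} does not lose any information relevant to directional derivatives along feasible directions.
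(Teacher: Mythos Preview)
Your proposal is correct and follows essentially the same route as the paper: invoke Lemma~\ref{lemma:Pd_properties}(b) to replace $\nabla f(x)^\top(x'-x)$ by $(P_d\nabla f(x))^\top(x'-x)$, then apply Theorem~\ref{thm:error_bound} (rescaled by $d(\alpha d+1)$), Cauchy--Schwarz, and the diameter bound $\|x'-x\|\le 2$.
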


\begin{proof}
By Lemma \ref{lemma:Pd_properties} (b) we have
\begin{align*}
    \nabla f(x)^\top (x' - x) = \nabla f(x)^\top P_d (x' - x).
\end{align*}
Using this, together with Theorem \ref{thm:error_bound} and the Cauchy-Schwarz inequality, we have
\begin{align}
\label{eq:grad}
    |\E \tilde g^\top (x' - x) - \nabla f(x)^\top (x' - x)| \leq 2\beta\delta d(\alpha d + 1)  \|x-x'\| \leq 4\beta\delta d(\alpha d + 1).
\end{align}
which follows since $\sup_{x,x'\in\Delta_d} \|x-x'\|\leq 2$.
\end{proof}

\section{Zeroth-order algorithms}
\label{sec:algos}

We prove convergence of standard first-order optimization methods, when gradient information is replaced with approximate gradient information obtained via the zeroth-order estimator from Theorem~\ref{thm:error_bound}. In particular, only assuming smoothness, in Lemma \ref{lemma:convergence} and Lemma \ref{lemma:convergence_ew} we show convergence of zeroth-order versions of projected gradient descent and the exponential weights algorithm in terms of the first-order suboptimality gap, $\max_{x^*\in\Delta_d} \nabla f(x_t)^\top (x_t - x^*)$. When $f$ is additionally convex, this in turn implies that the average iterate converges to the optimum, which we formalize in Theorem \ref{thm:avg_iterate}.

\subsection{Projected ``gradient'' descent}

Consider the update:
\begin{equation}
\label{eq:PGD}
\tag{PGD}
x_{t+1} = \Pi_{\Delta_d}(y_{t+1}):= \Pi_{\Delta_d}(x_t - \eta \tilde g_t), \text{ where } \tilde g_t = \frac{d(\alpha d +1)}{\delta} f((1-\delta)x_t + \delta u_t) P_d u_t.
\end{equation}
Here, $u_1,u_2,\dots \stackrel{\mathrm{i.i.d.}}{\sim}\mathrm{Dir}(\alpha \mathbf{1}_d)$, $\eta>0$ is a step size, and $\Pi_{\Delta_d}$ denotes a projection onto the simplex.

\begin{lemma}[Projected gradient descent]
\label{lemma:convergence}
Suppose that $f:\Delta_d\rightarrow\R$ is $\beta$-smooth \eqref{eq:smoothness}. Then, zeroth-order projected gradient descent \eqref{eq:PGD} with $\eta \propto \frac{T^{-3/4}}{d(\alpha d +1)}$ and $\delta \propto T^{-1/4}$ satisfies
$$\max_{x^*\in\Delta_d} \frac{1}{T} \sum_{t=1}^T \E \nabla f(x_t)^\top (x_t - x^*) = \mathcal{O}\left(T^{-1/4}\right).$$
\end{lemma}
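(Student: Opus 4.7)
The plan is to combine the textbook analysis of projected gradient descent with the first-order approximation guarantee of Corollary~\ref{corollary:1st_order}, and then tune $\eta$ and $\delta$ to balance the bias and variance terms that emerge from using $\tilde g_t$ instead of $\nabla f(x_t)$.

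First, I would exploit non-expansiveness of $\Pi_{\Delta_d}$: since any $x^* \in \Delta_d$ is a fixed point of the projection, the one-step inequality
$$\|x_{t+1} - x^*\|^2 \leq \|x_t - x^*\|^2 - 2\eta\, \tilde g_t^\top(x_t - x^*) + \eta^2 \|\tilde g_t\|^2$$
holds. Rearranging for $\tilde g_t^\top(x_t - x^*)$ and summing over $t = 1, \dots, T$, the squared distances telescope and, after using the simplex diameter bound $\|x_1 - x^*\|^2 \leq 4$, I obtain $\sum_t \tilde g_t^\top(x_t - x^*) \leq 2/\eta + \tfrac{\eta}{2}\sum_t \|\tilde g_t\|^2$.

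Next, I would take expectations and invoke Corollary~\ref{corollary:1st_order} conditionally on the filtration generated by $u_1, \dots, u_{t-1}$ (with respect to which $x_t$ is measurable and $u_t$ is independent). This converts each $\E[\tilde g_t^\top(x_t - x^*)]$ into $\E[\nabla f(x_t)^\top(x_t - x^*)]$ up to an additive per-step bias of $4\beta\delta d(\alpha d + 1)$. To complete the calculation I also need a uniform bound on $\E\|\tilde g_t\|^2$; this follows from $\|P_d u_t\| \leq 1$ almost surely (Lemma~\ref{lemma:Pd_properties}(d)) together with boundedness of $f$ on the compact simplex (a consequence of continuity of smooth functions on a compact set), yielding $\E\|\tilde g_t\|^2 = \mathcal{O}(d^2(\alpha d + 1)^2/\delta^2)$.

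The main obstacle is then the three-way balance. After dividing by $T$, the averaged first-order gap is bounded by the sum of a step-size/initialization term of order $1/(\eta T)$, a variance term of order $\eta d^2(\alpha d + 1)^2/\delta^2$, and a bias term of order $\beta\delta d(\alpha d + 1)$. The $1/\delta^2$ blow-up in the variance is the price paid for using a one-point estimator and is what couples all three terms, preventing one from pushing $\delta \to 0$ freely. Equating the terms via $\eta \propto T^{-3/4}/(d(\alpha d + 1))$ and $\delta \propto T^{-1/4}$ makes each of them of order $T^{-1/4}$, delivering the claimed $\mathcal{O}(T^{-1/4})$ rate.
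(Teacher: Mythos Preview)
Your proposal is correct and follows essentially the same route as the paper. The only cosmetic difference is that the paper writes $\tilde g_t = \tfrac{1}{\eta}(x_t - y_{t+1})$ and applies the polarization identity to $(x_t - y_{t+1})^\top(x_t - x^*)$ before using $\|x_{t+1}-x^*\|\le\|y_{t+1}-x^*\|$, whereas you expand $\|y_{t+1}-x^*\|^2$ directly and invoke non-expansiveness in one step; the resulting three-term bound and the parameter tuning are identical.
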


\begin{proof}
Let $x^*$ be an arbitrary point on the simplex $\Delta_d$.
Then, by Corollary \ref{corollary:1st_order},
\begin{align}
\label{eq:grad_bound}
    \E \nabla f(x_t)^\top (x_t - x^*) &\leq  \E \tilde g_t^\top (x_t - x^*) +  4\beta\delta d(\alpha d + 1).
\end{align}
Note that we can write
$$\tilde g_t = \frac{1}{\eta} (x_t - y_{t+1}).$$
Plugging this into Eq.~\eqref{eq:grad_bound}, we have
\begin{align*}
    \E \nabla f(x_t)^\top (x_t - x^*) &\leq  \frac{1}{\eta} \E (x_t - y_{t+1})^\top (x_t - x^*) + 4\beta\delta d(\alpha d + 1)\\
    &=  \frac{1}{2\eta} \left(\E \|x_t - y_{t+1}\|^2 + \E\|x_t - x^*\|^2 - \E \|y_{t+1} - x^*\|^2\right) + 4\beta\delta d(\alpha d + 1)\\
    &\leq  \frac{1}{2\eta} \left(\E \|x_t - y_{t+1}\|^2 + \E \|x_t - x^*\|^2 - \E \|x_{t+1} - x^*\|^2\right) + 4\beta\delta d(\alpha d + 1),
\end{align*}
where we use the fact that the projected iterate must be closer to $x^*$ than the unprojected one, $\|x_{t+1} - x^*\| \leq \|y_{t+1} - x^*\|$, which follows by convexity of the simplex.

Now we sum up both sides over $t\in\{1,\dots,T\}$ and divide by $T$:
\begin{align*}
\frac{1}{T}\sum_{t=1}^T \E \nabla f(x_t)^\top (x_t - x^*) &\leq  \frac{1}{2\eta T} \left( \sum_{t=1}^T\E \|x_t - y_{t+1}\|^2 + \|x_1 - x^*\|^2 - \E \|x_{T+1} - x^*\|^2\right) + 4\beta\delta d(\alpha d + 1),
\end{align*}
which follows by a telescoping-sum argument.

Since $f$ is smooth and defined on the probability simplex, there must exist a value $B>0$ such that $\sup_{x\in\Delta_d} |f(x)|\leq B$. This in turn implies $\|x_t - y_{t+1}\| \leq \frac{\eta d(\alpha d + 1)}{\delta} B$ for all $t$. Therefore,
\begin{align*}
\frac{1}{T}\sum_{t=1}^T \E \nabla f(x_t)^\top (x_t - x^*) \leq \frac{B^2 \eta d^2(\alpha d + 1)^2}{2\delta^2} +  \frac{2}{\eta T} + 4\beta\delta d(\alpha d +1),
\end{align*}
where we use $\|x_1 - x^*\|\leq 2$ to bound the second term.

Finally, setting $\eta \propto \frac{T^{-3/4}}{d(\alpha d +1)}$ and $\delta \propto T^{-1/4}$ yields a rate of
$$\max_{x^*\in\Delta_d} \frac{1}{T}\sum_{t=1}^T  \E \nabla f(x_t)^\top (x_t - x^*) = \mathcal O\left(T^{-1/4}\right).$$
\end{proof}

\subsection{Exponential weights}

We also consider a zeroth-order version of the exponential weights algorithm in order to remove the need for the potentially costly projection onto the simplex. In particular, each entry $i\in \{1,\dots, d\}$ is updated as:
\begin{equation}
\label{eq:exp_weights}
\tag{EW}
 x_{t+1,i}=\frac{x_{t,i}\exp(-\eta \tilde g_{t, i})}{Z_t}, \text{ where } \tilde g_{t}= \frac{d(\alpha d + 1)}{\delta} f((1-\delta)x_t + \delta u_t)P_d u_t.
 \end{equation}
Here, $u_1,u_2,\dots\stackrel{\mathrm{i.i.d.}}{\sim} \text{Dir}(\alpha \mathbf{1}_d)$, $Z_t=\sum_{i=1}^d x_{t,i}\exp(-\eta \tilde g_{t, i})$, and $\eta>0$ is a step size.

\begin{lemma}[Exponential weights]
\label{lemma:convergence_ew}
Suppose that $f:\Delta_d\rightarrow\R$ is $\beta$-smooth \eqref{eq:smoothness}. Then, the exponential weights update \eqref{eq:exp_weights} with $\eta \propto \frac{T^{-3/4}}{d(\alpha d +1)}$ and $\delta \propto T^{-1/4}$ satisfies
$$\max_{x^*\in\Delta_d} \frac{1}{T} \sum_{t=1}^T \E \nabla f(x_t)^\top (x_t - x^*) = \mathcal{O}\left(T^{-1/4}\right).$$
\end{lemma}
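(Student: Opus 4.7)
The plan is to mirror the proof of Lemma \ref{lemma:convergence} but replace the Euclidean distance potential with the Bregman divergence natural to the EW update, namely the KL divergence. Fix any $x^* \in \Delta_d$ and define $\Phi_t(x^*) = \KL(x^* \| x_t) = \sum_{i=1}^d x^*_i \log(x^*_i/x_{t,i})$. Plugging in the EW update \eqref{eq:exp_weights} directly yields the one-step identity
$$\Phi_t(x^*) - \Phi_{t+1}(x^*) = -\eta \tilde g_t^\top x^* - \log Z_t.$$

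Next, I would bound $\log Z_t = \log \sum_i x_{t,i} \exp(-\eta \tilde g_{t,i})$ from above via Hoeffding's lemma applied to the random variable $-\eta \tilde g_{t, I}$ with $I \sim x_t$. The relevant boundedness facts are that $f$ is continuous on the compact simplex, so there exists $B$ with $\sup_{x \in \Delta_d} |f(x)| \leq B$, and that $(P_d u_t)_i = u_{t,i} - 1/d \in [-1/d, 1 - 1/d]$, giving $\|P_d u_t\|_\infty \leq 1$. Together these imply $\|\tilde g_t\|_\infty \leq d(\alpha d + 1) B/\delta$, so Hoeffding's lemma yields $\log Z_t \leq -\eta \tilde g_t^\top x_t + (\eta^2/2)\|\tilde g_t\|_\infty^2$. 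Rearranging the identity above produces the familiar per-step regret bound
$$\tilde g_t^\top (x_t - x^*) \leq \frac{\Phi_t(x^*) - \Phi_{t+1}(x^*)}{\eta} + \frac{\eta d^2(\alpha d+1)^2 B^2}{2\delta^2}.$$

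Summing over $t = 1, \dots, T$ and telescoping (using $\Phi_1(x^*) \leq \log d$ provided $x_1$ is uniform), then taking expectations and invoking Corollary \ref{corollary:1st_order} conditionally on $x_t$, incurs the same per-step bias $4\beta \delta d(\alpha d + 1)$ as in Lemma \ref{lemma:convergence} and yields
$$\max_{x^* \in \Delta_d} \frac{1}{T}\sum_{t=1}^T \E \nabla f(x_t)^\top (x_t - x^*) \leq \frac{\log d}{\eta T} + \frac{\eta d^2(\alpha d + 1)^2 B^2}{2\delta^2} + 4\beta \delta d(\alpha d + 1).$$
The three terms here have the same dependence on $\eta$ and $\delta$ as in the PGD bound, so the parameter choices $\eta \propto T^{-3/4}/(d(\alpha d + 1))$ and $\delta \propto T^{-1/4}$ transfer verbatim and produce the desired $\mathcal O(T^{-1/4})$ rate. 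The main subtlety, and the only real departure from the PGD proof, is the one-step analysis of $\log Z_t$ via Hoeffding's lemma; once this quadratic correction is in place the remainder of the argument is structurally identical.
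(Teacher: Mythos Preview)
Your proposal is correct and follows essentially the same approach as the paper: the KL-potential identity, the per-step bound on $\log Z_t$, the telescoping sum, the bias correction via Corollary~\ref{corollary:1st_order}, and the parameter choices all match. The one cosmetic difference is that you control $\log Z_t$ via Hoeffding's lemma (yielding the correction $(\eta^2/2)\|\tilde g_t\|_\infty^2$), whereas the paper uses the elementary bound $\exp(x)\le 1+x+x^2$ for $x\le 1$ followed by $\log x\le x-1$ (yielding $\eta^2\|\tilde g_t\|^2$), which requires the side condition $|\eta\tilde g_{t,i}|\le 1$ that your route sidesteps.
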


\begin{proof}
Fix any $x^* \in \Delta_d$; then we have
\begin{align}
    \KL(x^*||x_t)-\KL(x^*||x_{t+1})&=\sum_{i=1}^d x_i^*\log{\frac{x_{t+1,i}}{x_{t,i}}}\nonumber\\
    &=\sum_{i=1}^d x_i^*\left(-\eta \tilde g_{t,i}-\log{Z_t} \right)\nonumber\\
    &=-\eta \tilde g_t^\top x^* -\log{Z_t}. \label{eq:ineq1}
\end{align}
Focusing on the second term, we find that
\begin{align}
    \log{Z_t}&=\log{\sum_{i=1}^d x_{t,i}\exp(-\eta \tilde g_{t, i})}\nonumber\\
    &\le \log{\sum_{i=1}^d x_{t,i}\left(1-\eta \tilde g_{t,i}+\eta^2 \tilde g_{t,i}^2\right)}\nonumber\\
    &\le \log{\left(1-\eta x_t^\top \tilde g_t +\eta^2\|\tilde g_t\|^2\right)}\nonumber\\
    &\le -\eta  x_t^\top \tilde g_t +\eta^2\|\tilde g_t\|^2 \label{eq:ineq2},
\end{align}
where the first inequality follows because $\exp(x)\le 1+x+x^2$ for $x\le 1$ and $|\eta \tilde g_{t,i}|$ is less than $1$ for large enough $T$; the last inequality follows since
$\log x\le x-1$. Putting Eq.~\eqref{eq:ineq1} and Eq.~\eqref{eq:ineq2} together, we find that
\begin{align*}
    \eta (x_t -x^*)^\top \tilde g_t &\le \KL(x^*||x_t)-\KL(x^*||x_{t+1})+ \eta^2\|\tilde g_t\|^2.
\end{align*}
This in turn implies
\begin{align*}
\eta ( x_t -x^*)^\top \nabla f(x_t) &\le \eta ( x_t -x^*)^\top (\nabla f(x_t)-\tilde g_t) +\KL(x^*||x_t)-\KL(x^*||x_{t+1})+ \eta^2\|\tilde g_t\|^2.
\end{align*}
Taking an expectation on both sides above, we find that
\begin{align*}
    \eta \E (x_t -x^*)^\top \nabla f(x_t) &\le \eta \E (x_t -x^*)^\top (\nabla f(x_t)-\tilde g_t) +\E \KL(x^*||x_t)-\E \KL(x^*||x_{t+1})+ \eta^2\E \|\tilde g_t\|^2\\
    &\le 4\eta \beta\delta d(\alpha d + 1) + \frac{\eta^2 B^2 d^2(\alpha d +1)^2}{\delta^2} + \E \KL(x^*||x_t)-\E\KL(x^*||x_{t+1}),
\end{align*}
where we use Corollary \ref{corollary:1st_order}, as well as the fact that there must exist a value $B>0$ such that $\sup_{x\in\Delta_d} |f(x)|\leq B$, given that $f$ is smooth and defined on the probability simplex.

Summing over $t\in\{1,\dots,T\}$ and dividing by $\eta T$ gives
\begin{align*}
\frac{1}{T}\sum_{t=1}^T  \E \nabla f(x_t)^\top (x_t - x^*) \le 4\beta\delta d(\alpha d+1) + \frac{\eta B^2 d^2(\alpha d+1)^2}{\delta^2} + \frac{\KL(x^*||x_1)}{\eta T}.
\end{align*}
Choosing $\eta \propto \frac{T^{-3/4}}{d(\alpha d + 1)}$ and $\delta \propto T^{-1/4}$ yields the final result.
\end{proof}

\subsection{Convergence to optima under convexity}

Lemma \ref{lemma:convergence} and Lemma \ref{lemma:convergence_ew} prove that the usual first-order suboptimality gap for convex problems decreases at a $\mathcal O(T^{-1/4})$ rate; moreover, these technical lemmas hold even if $f$ is not convex. When $f$ is convex, however, this directly implies that the average iterate converges to the optimum.

\begin{theorem}
\label{thm:avg_iterate}
Suppose that $f:\Delta_d\rightarrow\R$ is $\beta$-smooth \eqref{eq:smoothness} and additionally convex. Let $x^* = \argmin_{x\in\Delta_d} f(x)$. Then, both zeroth-order projected gradient descent \eqref{eq:PGD} and the exponential weights update \eqref{eq:exp_weights} with $\eta\propto \frac{T^{-3/4}}{d(\alpha d + 1)}$ and $\delta\propto T^{-1/4}$ satisfy
$$\E f(\bar x_T) - f(x^*) = \mathcal O\left(T^{-1/4}\right),$$
where $\bar x_T = \frac{1}{T}\sum_{t=1}^T x_t$.
\end{theorem}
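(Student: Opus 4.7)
The plan is to chain together two standard consequences of convexity with the first-order suboptimality bounds already established in Lemma \ref{lemma:convergence} and Lemma \ref{lemma:convergence_ew}. Specifically, once we can translate the per-iterate quantity $\nabla f(x_t)^\top (x_t - x^*)$ into a function-value gap $f(x_t) - f(x^*)$, and then move from an average of function values to the function value at the average iterate, the result will follow immediately.

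First, I would fix $x^* = \argmin_{x\in\Delta_d} f(x)$ and appeal to convexity of $f$ in its standard first-order form: $f(x^*) \geq f(x_t) + \nabla f(x_t)^\top (x^* - x_t)$, which rearranges to $f(x_t) - f(x^*) \leq \nabla f(x_t)^\top (x_t - x^*)$. Taking expectations, averaging over $t \in \{1,\dots,T\}$, and plugging in the $\mathcal{O}(T^{-1/4})$ bound from Lemma \ref{lemma:convergence} (for the PGD case) or Lemma \ref{lemma:convergence_ew} (for the EW case) under the stated choices of $\eta$ and $\delta$ yields
\[
\frac{1}{T}\sum_{t=1}^T \E[f(x_t) - f(x^*)] \leq \frac{1}{T}\sum_{t=1}^T \E \nabla f(x_t)^\top (x_t - x^*) = \mathcal{O}\!\left(T^{-1/4}\right).
\]

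Second, I would use Jensen's inequality (equivalently, convexity of $f$ applied to the uniform average): $f(\bar x_T) \leq \frac{1}{T}\sum_{t=1}^T f(x_t)$, where $\bar x_T = \frac{1}{T}\sum_{t=1}^T x_t$ lies in $\Delta_d$ by convexity of the simplex. Taking expectation and combining with the previous display gives
\[
\E f(\bar x_T) - f(x^*) \leq \frac{1}{T}\sum_{t=1}^T \E[f(x_t) - f(x^*)] = \mathcal{O}\!\left(T^{-1/4}\right),
\]
which is the claim.

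There is essentially no obstacle here: both ingredients (the convexity inequality and Jensen) are entirely standard, and the nontrivial work has already been done in Lemma \ref{lemma:convergence} and Lemma \ref{lemma:convergence_ew}. The only thing to be careful about is that the bounds from those lemmas hold uniformly over $x^* \in \Delta_d$, so in particular they apply to the minimizer $x^* = \argmin_{x\in\Delta_d} f(x)$, and that $\bar x_T \in \Delta_d$ so that evaluating $f$ at it is well-defined.
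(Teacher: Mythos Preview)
Your proposal is correct and matches the paper's own proof essentially step for step: convexity gives $f(x_t)-f(x^*)\le \nabla f(x_t)^\top(x_t-x^*)$, averaging and invoking Lemma~\ref{lemma:convergence} or Lemma~\ref{lemma:convergence_ew} bounds the right-hand side by $\mathcal O(T^{-1/4})$, and Jensen's inequality passes to $f(\bar x_T)$. There is nothing to add.
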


\begin{proof}
Since $f$ is convex, we know
\begin{align*}
    f(x_t) - f(x^*) \leq \nabla f(x_t)^\top (x_t - x^*).
\end{align*}
Therefore, we have the following bound:
$$\frac{1}{T} \sum_{t=1}^T \E f(x_t) - f(x^*) \leq \frac{1}{T} \sum_{t=1}^T \E \nabla f(x_t)^\top (x_t - x^*).$$
By Jensen's inequality we know $\frac{1}{T} \sum_{t=1}^T f(x_t)  \geq f\left(\frac{1}{T} \sum_{t=1}^T x_t\right)$, and so we have
$$ \E f\left(\frac{1}{T} \sum_{t=1}^T x_t\right) - f(x^*) \leq \frac{1}{T} \sum_{t=1}^T \E \nabla f(x_t)^\top (x_t - x^*).$$
Finally, invoking Lemma \ref{lemma:convergence} or Lemma \ref{lemma:convergence_ew} (depending on the algorithm), yields
$$\E f\left(\frac{1}{T} \sum_{t=1}^T x_t\right) - f(x^*) = \mathcal O\left(T^{-1/4}\right),$$
which completes the proof.
\end{proof}

\section{Auxiliary lemmas}

\begin{lemma}
\label{lemma:Pd_properties}
Let
$$P_d = I_d - \frac{1}{d} \mathbf{1}_d\mathbf{1}_d^\top.
$$
Then,
\begin{itemize}
\item[(a)] $P_d^k = P_d$, for all $k\in \N$;
\item[(b)] $P_d (x-x') = x-x'$ for all $x,x'\in\Delta_d$;
\item[(c)] $\E[P_d u] = 0$ for $u\sim\mathrm{Dir}(\alpha\mathbf{1}_d)$, for all $\alpha>0$;
\item[(d)] $\sup_{x\in\Delta_d} \|P_d x\|\leq 1$.
\end{itemize}
\end{lemma}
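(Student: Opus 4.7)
The plan is to handle each of the four parts as a short direct computation, since $P_d$ is the standard orthogonal projection onto the hyperplane $\{v \in \R^d : \mathbf{1}_d^\top v = 0\}$ and each claim is essentially a restatement of a property of that projection.

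For part (a), I would first verify $P_d^2 = P_d$ by expanding $(I_d - \tfrac{1}{d}\mathbf{1}_d\mathbf{1}_d^\top)^2$ and using the identity $\mathbf{1}_d^\top \mathbf{1}_d = d$ to collapse the cross term. A straightforward induction on $k$ then gives $P_d^k = P_d$ for all $k \in \N$. For part (b), I would note that any $x, x' \in \Delta_d$ satisfy $\mathbf{1}_d^\top x = \mathbf{1}_d^\top x' = 1$, hence $\mathbf{1}_d^\top(x-x') = 0$, and the subtracted term $\tfrac{1}{d}\mathbf{1}_d \mathbf{1}_d^\top(x-x')$ vanishes, leaving $P_d(x-x') = x-x'$.

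For part (c), I would exploit the symmetry of the symmetric Dirichlet $\mathrm{Dir}(\alpha \mathbf{1}_d)$: all coordinates of $u$ are exchangeable with mean $1/d$, so $\E u = \tfrac{1}{d}\mathbf{1}_d$. Then $P_d \E u = \tfrac{1}{d}\mathbf{1}_d - \tfrac{1}{d^2}\mathbf{1}_d(\mathbf{1}_d^\top \mathbf{1}_d) = \tfrac{1}{d}\mathbf{1}_d - \tfrac{1}{d}\mathbf{1}_d = 0$, and linearity of expectation gives $\E[P_d u] = 0$. For part (d), I would compute $\|P_d x\|^2$ directly: expanding and using $\mathbf{1}_d^\top x = 1$ yields
\[
\|P_d x\|^2 = \|x\|^2 - \tfrac{1}{d}(\mathbf{1}_d^\top x)^2 = \|x\|^2 - \tfrac{1}{d},
\]
and since $x \in \Delta_d$ satisfies $\|x\|^2 = \sum_i x_i^2 \leq \sum_i x_i = 1$ (each $x_i \in [0,1]$), we get $\|P_d x\|^2 \leq 1 - \tfrac{1}{d} \leq 1$.

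There is no real obstacle here; the only minor care needed is keeping track of the scalar $\mathbf{1}_d^\top \mathbf{1}_d = d$ in parts (a), (c), and (d), and noting the inequality $\sum_i x_i^2 \leq \sum_i x_i$ in part (d) relies on nonnegativity of the entries of $x$, which is built into the definition of $\Delta_d$.
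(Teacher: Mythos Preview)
Your proposal is correct and follows essentially the same route as the paper: direct expansion for (a), the observation $\mathbf{1}_d^\top(x-x')=0$ for (b), $\E u = \tfrac{1}{d}\mathbf{1}_d$ for (c), and the computation $\|P_d x\|^2 = \|x\|^2 - \tfrac{1}{d} \le 1 - \tfrac{1}{d}$ for (d). The only cosmetic difference is that in (d) you implicitly use idempotence to write $\|P_d x\|^2 = x^\top P_d x$, whereas the paper expands $\|x - \tfrac{1}{d}\mathbf{1}_d\|^2$ directly; both yield the same expression.
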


\begin{proof}
\textit{(a)} We show $P_d^2 = P_d$, which implies $P_d^k = P_d$ for all $k\in\N$. This follows since
$$P_d^2 = \left(I_d - \frac{1}{d} \mathbf{1}_d\mathbf{1}_d^\top\right) \left(I_d - \frac{1}{d} \mathbf{1}_d\mathbf{1}_d^\top\right) = I_d - \frac{2}{d} \mathbf{1}_d\mathbf{1}_d^\top + \frac{\mathbf{1}_d^\top\mathbf{1}_d}{d^2} \mathbf{1}_d\mathbf{1}_d^\top = I_d - \frac{1}{d} \mathbf{1}_d\mathbf{1}_d^\top = P_d.$$

\textit{(b)} For all $x,x'\in\Delta_d$, we have 
$$P_d (x-x') = x-x' - \frac{1}{d} \mathbf{1}_d\mathbf{1}_d^\top(x-x') = x-x',$$
since $\mathbf{1}_d^\top x = \mathbf{1}_d^\top x' = 1$.

\textit{(c)} For all $\alpha>0$, the mean of the Dirichlet distribution is $\E[u] = \frac{1}{d} \mathbf{1}_d$; this in turn gives $$\E[P_d u] = \left(I_d - \frac{1}{d} \mathbf{1}_d\mathbf{1}_d^\top\right) \frac{1}{d} \mathbf{1}_d = \frac{1}{d} \mathbf{1}_d - \frac{\mathbf{1}_d^\top \mathbf{1}_d}{d^2} \mathbf{1}_d = 0.$$

\textit{(d)} For any $x\in\Delta_d$, we have
\begin{align*}
    \|P_d x\|^2 &= \left\|\left(I_d - \frac{1}{d} \mathbf{1}_d\mathbf{1}_d^\top\right) x\right\|^2\\
    &= \left\|x - \frac{1}{d} \mathbf{1}_d\right\|^2\\
    &= \|x\|^2 - \frac{2}{d} x^\top \mathbf{1}_d + \frac{1}{d^2} \|\mathbf{1}_d\|^2\\
    &= \|x\|^2 - \frac{2}{d} + \frac{1}{d}\\
    &\leq 1 - \frac{1}{d}\\
    &= \frac{d-1}{d},
\end{align*}
where the inequality follows due to $\|x\|\leq \|x\|_1 = 1$. Therefore, $\|P_dx\| \leq \sqrt{\frac{d-1}{d}} \leq 1$.
\end{proof}

\begin{lemma}
\label{lemma:smoothness}
Let $f:\mathcal{D}\rightarrow \R$ be $\beta$-smooth, i.e.
$$\|\nabla f(x) - \nabla f(y)\|\leq \beta \|x-y\|,$$
for all $x,y\in\mathcal{D}\subseteq \R^d$. Then, for all $x,y\in\mathcal{D}$:
$$|f(y) - f(x) - \nabla f(x)^\top (y-x)| \leq \frac{\beta}{2}\|x-y\|^2.$$
\end{lemma}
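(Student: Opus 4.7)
The plan is to use the standard one-dimensional reduction via the fundamental theorem of calculus. Define $\varphi:[0,1]\to\R$ by $\varphi(t) = f(x + t(y-x))$, so that $\varphi(0)=f(x)$ and $\varphi(1)=f(y)$, and $\varphi'(t) = \nabla f(x+t(y-x))^\top (y-x)$. Integrating and subtracting the linear part gives the identity
$$f(y) - f(x) - \nabla f(x)^\top (y-x) = \int_0^1 \bigl(\nabla f(x+t(y-x)) - \nabla f(x)\bigr)^\top (y-x)\, dt.$$
From this identity the lemma reduces to bounding the integrand pointwise in $t$.

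Next I would apply the Cauchy--Schwarz inequality inside the integral to replace the inner product by a product of norms, and then invoke the smoothness hypothesis on the first factor, which yields $\|\nabla f(x+t(y-x)) - \nabla f(x)\| \leq \beta \cdot t \|y-x\|$. Combined with $\|y-x\|$ from the second factor, the integrand is bounded by $\beta t \|y-x\|^2$. Integrating $\int_0^1 t\,dt = 1/2$ finishes the computation and produces the advertised $\frac{\beta}{2}\|x-y\|^2$ bound.

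The only real subtlety is that for $\varphi$ to be well-defined we need the whole segment $\{x+t(y-x):t\in[0,1]\}$ to lie in $\mathcal{D}$; this is implicit in the way the hypothesis is used (the smoothness bound is applied at points along the segment). In the paper's setting $\mathcal{D}=\Delta_d$ is convex, so this is automatic. I would mention this briefly, but there is no substantive obstacle: the proof is a routine, textbook calculation, and I do not anticipate any step requiring more than the ingredients above.
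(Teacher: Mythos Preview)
Your argument is correct and is the standard textbook proof, but it differs from the paper's. The paper does not integrate along the segment; instead it uses Cauchy--Schwarz on the smoothness hypothesis to obtain the two-sided monotonicity inequality $|(\nabla f(x)-\nabla f(y))^\top(x-y)|\le\beta\|x-y\|^2$, rewrites this as saying that both $x\mapsto f(x)+\tfrac{\beta}{2}\|x\|^2$ and $x\mapsto \tfrac{\beta}{2}\|x\|^2-f(x)$ have monotone gradients and are therefore convex, and then reads off the two one-sided bounds from the first-order characterization of convexity applied to each of these functions. Your approach is more self-contained and computational (the $\int_0^1 t\,dt=\tfrac12$ makes the constant transparent), while the paper's approach buys a slightly more structural statement along the way (namely that $\pm f+\tfrac{\beta}{2}\|\cdot\|^2$ are convex). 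Both routes tacitly rely on the same geometric hypothesis you flagged: the segment $[x,y]$ must lie in $\mathcal{D}$, which is automatic for $\mathcal{D}=\Delta_d$.
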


\begin{proof}
By the Cauchy-Schwarz inequality, the smoothness condition implies 
$$- \beta \|x-y\|^2 \leq (\nabla f(x) - \nabla f(y))^\top (x - y) \leq \beta \|x-y\|^2.$$
Rearranging, this expression is equivalent to
$$(\nabla f(x) - \nabla f(y) + \beta(x-y))^\top (x-y) \geq 0 \text{ and } (\beta (x-y) - (\nabla f(x) - \nabla f(y)))^\top (x-y) \geq 0.$$
By the monotone-gradient characterization of convexity, this means that both $f(x) + \frac{\beta}{2}\|x\|^2$ and $\frac{\beta}{2}\|x\|^2 - f(x)$ are convex. The first-order characterization of convexity for these two functions is exactly
$$f(y) \geq f(x) + \nabla f(x)^\top (y-x) - \frac{\beta}{2} \|x-y\|^2 \text{ and } f(y) \leq f(x) + \nabla f(x)^\top (y-x) + \frac{\beta}{2} \|x-y\|^2,$$
which directly implies the desired claim.
\end{proof}

\bibliographystyle{plainnat}
\bibliography{refs}

\end{document}